\newcommand{\sO}{\mathcal{O}}
\newtheorem{defi}{Definition}[section]
\newtheorem{thm}{Theorem}[section]
\newtheorem{lemma}{Lemma}[section]
\newtheorem{eg}{Example}[section]
\DeclareMathOperator*{\argmax}{argmax} 
\begin{document}

\begin{frontmatter}

\begin{fmbox}
\dochead{Research}


\title{CausNet : Generational orderings based search for optimal Bayesian networks via dynamic programming with parent set constraints}


\author[
   addressref={aff1},                   
   corref={aff1},                       
   email={nandsh11@gmail.com}   
]{\fnm{Nand} \snm{Sharma}}
\author[
   addressref={aff1}
]{\fnm{Joshua} \snm{Millstein}}

\address[id=aff1]{
  \orgname{Division of Biostatistics, Department of Preventive Medicine, University of Southern California}, 
  \city{Los Angeles},                              
  \cny{USA}.                                   
}


\end{fmbox}


\begin{abstractbox}

\begin{abstract} 
\parttitle{Background} 
Finding a globally optimal Bayesian Network using exhaustive search is a problem with super-exponential complexity, which severely restricts the number of variables that it can work for. 
We implement a dynamic programming based algorithm with built-in dimensionality reduction and parent set identification. This reduces the search space drastically and can be applied to large-dimensional data. We use what we call ‘generational orderings’ based search for optimal networks, which is a novel way to efficiently search the space of possible networks given the possible parent sets. The algorithm supports both continuous and categorical data, and categorical as well as survival outcomes.

\parttitle{Results} 
We demonstrate the efficacy of our algorithm on both synthetic and real data. In simulations, our algorithm performs better than three state-of-art algorithms that are currently used extensively. We then apply it to an Ovarian Cancer gene expression dataset with 513 genes and a survival outcome. Our algorithm is able to find an optimal network describing the disease pathway consisting of 6 genes leading to the outcome node in a few minutes on a basic computer. 

\parttitle{Conclusions} 
Our generational orderings based search for optimal networks, is both efficient and highly scalable approach to finding optimal Bayesian Networks, that can be applied to 1000s of variables. Using specifiable parameters - correlation, FDR cutoffs, and in-degree - one can increase or decrease the number of nodes and density of the networks. Availability of two scoring option - BIC and Bge - and implementation for survival outcomes and mixed data types makes our algorithm very suitable for many types of high dimensional data in a variety of fields.
\end{abstract}


\begin{keyword}
\kwd{Optimal Bayesian Network}
\kwd{dynamic programming}
\kwd{generational orderings}
\end{keyword}


\end{abstractbox}
%

\end{frontmatter}



\section{Introduction}
Optimal Bayesian Network (BN) Structure Discovery is a method of learning Bayesian networks from data that has applications in wide variety of areas including epidemiology (see e.g. \cite{ExpressionApp}, \cite{appNeuro}, \cite{appDis1}, \cite{AppGenes}). Disease pathways found using BN leading to a phenotype outcome can lead to better understanding, diagnosis and treatment of a disease. The challenge in finding an optimal BN lies in the super-exponential complexity of the search. Dynamic programming can reduce the complexity to exponential, but still the number of features/nodes being explored remains very small -  $10$ to $20$ nodes - and only by restricting the maximum number of parents for each node. To alleviate the challenge of high dimensionality, we implement a dynamic programming algorithm with parent set constraints. We use what we call ‘generational orderings’ based search for optimal networks, which is a novel way to efficiently search the space of possible networks given the possible parent sets.

Most current algorithms do not work for both continuous and categorical nodes, and none works for survival data as the outcome node. We implement support for both continuous and categorical data, and categorical as well as survival outcomes. This is especially useful for disease modeling where mixed data  and survival outcomes are common. We also provide options of two common scoring functions and also output multiple best networks when available.

Our main novel contribution aside from providing software is the revision of the Silander algorithm 3 \cite{Silander} to incorporate possible parent sets for a much more efficient way to explore the search space as compared to the original approach based on lexicographical ordering. In doing so, we  cover the entire constrained search space without searching through networks that don't conform to the parent set constraints.

\section{Background}

In this section, we  review the necessary background in Bayesian networks and the Bayesian network structure discovery problem (for more background on these topics see, for example, \cite{darwiche_2009}, \cite{bnBookKorb}).
 
A Bayesian network (BN) is a probabilistic graphical model that consists of a labeled directed acyclic graph (DAG) in which the vertices $V = \{v_1, . . . , v_p\}$ correspond to random variables and the edges represent conditional dependence of one random variable on another. Each vertex $v_i$ is labeled with a conditional probability distribution $P(v_i | parents(v_i))$ that specifies the dependence of the variable $v_i$ on its set of parents $parents(v_i)$ in the DAG. A BN can also be viewed as a factorized representation of the joint probability distribution over the random variables and as an encoding of conditional dependence and independence assumptions.

A Bayesian network $G$ can be described as a
vector $G = (G_1,...,G_p)$ of parent sets: $G_i$ is the subset of $V$ from which there are directed edges to $v_i$. Any $G$ that is a DAG corresponds to an ordering of the nodes, given by the ordered set $\{ v_{\sigma_i} \}, i \in \{ 1,2,.. ,p\}$, where $\sigma$ is a permutation of $[p]$ - the ordered set of first $p$ natural numbers, with $\sigma(i) = \sigma_i$. A BN is said to consistent with an ordering $\{ v_{\sigma_i} \}$ if parents of $v_{\sigma_i}$ are a subset of $\{ v_{\sigma_j} \}$, if $ j < i$, i.e.  
$G_{\sigma_j} \subseteq \{ v_{\sigma_j} \}, j < i$. 

One of the main methods for BN structure learning from data uses a scoring function that assigns a real value to the quality of $G$ given the data. For finding a best network structure, we maximize this score over the space of possible networks. Note that we can have multiple best networks with the same score. Scoring functions balance goodness of fit to the data with a penalty term for model complexity. Some commonly used scoring functions are BIC/MDL \cite{bicBase}, BDeu \cite{Carvalho09scoringfunctions}, and BGe \cite{bgeBase} \cite{bgeAdd}.  
We use BIC (Bayesian information criterion) and BGe (Bayesian Gaussian  equivalent) scoring functions as two options for using Causnet. BIC is a log-likelihood (LL) score where the overfitting is avoided by using a penalty term for the number of parameters in the model, specifically $p\ln(n)$, where $n$ is the sample size. The BGe score is the posterior probability of the model hypothesis that the true distribution of the set of variables is faithful to the DAG model, meaning that it satisfies  all the conditional independencies encoded by the DAG, and is proportional to the marginal likelihood and the graphical prior\cite{bgeBase} \cite{bgeAdd}. 

\section{CausNet}
CausNet uses the dynamic programming approach to finding a best Bayesian network structure for a given dataset. The idea of using dynamic programming for exact Bayesian network structure  discovery was first proposed by Koivisto \& Sood  \cite{Koivisto2004ExactBS}, \cite{Koivisto2}. Recent work using dynamic programming, includes  that by Silander and Myllym\"{a}ki in  \cite{Silander} and by Singh and Andrew in \cite{Singh}. 

We closely follow the algorithm proposed by Silander and Myllym\"{a}ki (SM algorithm henceforth) and make heuristic modifications focusing on finding sparse networks and disease pathways. This is achieved by dimensionality reduction with parent set identification, and by restricting the search to the space of `generational' orderings rather than lexicographical ordering as in the original SM algorithm.

Finding a best Bayesian network structure is NP-hard \cite{Chickering2004}. The number of possible structures for $n$ variables is $\sO\left(  n! 2^{{n \choose 2}} \right) $, making exhaustive search impractical. So the dynamic programming algorithms are feasible only for small data sets; usually less than $20$ variables; the number of variables can be increased somewhat by using small in-degree (maximum number of parents for any node). 
But bounding the in-degree by a constant $k$ does not help much, a lower bound for number of possible graphs still being at least $n! 2^{kn \log n}$ (for large enough $n$) \cite{Koivisto2004ExactBS}. 
We introduce parent set identification and `generational' orderings based search to reduce the search space and thus scale up the dynamic programming algorithm for higher number of variables.  

The dynamical programming approach uses the key fact about DAGs that every DAG has at least one sink. The problem of finding a best Bayesian network given the data $\cal{D}$ starts with finding a best sink for the whole set of nodes. 
Then that node is removed and the process is repeated recursively for the remaining set of nodes. The result is an ordering of the nodes $\{ v_{\sigma_i} \}, i \in \{ 1,2,.. ,p\}$, from which the DAG can be recovered. 
Denoting the best sink by $s$, and the score of a best network with nodes $V$ by $ bestscore(V)$, and the best score of $s$ with parents in $U$ by $bestScore(s,U)$, the recursion is given by the following relation :
\begin{equation}
bestScore(V) = bestscore(V \setminus \left\lbrace s \right\rbrace ) +bestScore(s, V \setminus \left\lbrace s \right\rbrace ).
\end{equation}
To implement the above recursion, the idea of a local score for a node $ v_i $ with parents $parents(v_i)$ is used, which we get using a scoring function. The requirement for a score function $score(G)$ for a network $G$ is that it should be decomposable, meaning that the total score of the network is the sum of scores for each node in the network, and the score of a node depends only on the node and its parents. Formally,
\begin{equation}
score(G) = \sum_{i=1}^{p} localscore(v_i,G_i).
\end{equation}
where the local scoring function $localscore(x,y)$ gives the score of $x$ with parents $y$ in the network $G$. In a given set of possible parents $pp_i$ for node $v_i$, we find the best set of parents $bps_i$ which give the best local score for $v_i$, so that
\begin{equation}
bestScore\left( v_i, pp_i )\right) = \max_{g \subseteq pp_i} localscore(v_i, g),
\end{equation}
\begin{equation}
bps_i (pp_i ) = \argmax_{ g \subseteq pp_i  } localscore(v_i, g).
\end{equation}
Now the best sink $s$ can be found by Eq. \ref{bestSinkV}, and the best score for a best network in $V$ can be found by Eq. \ref{bestScoreV}.
\begin{equation}\label{bestSinkV}
bestSink \left( V\right) = \argmax_{s \in V} bestscore(V \setminus \left\lbrace s \right \rbrace ) + bestScore(s, V \setminus \left\lbrace s \right\rbrace ).
\end{equation}
\begin{equation}\label{bestScoreV}
bestscore(V) = \max_{s \in V}   bestscore(V \setminus \left\lbrace s \right\rbrace ) + bestScore(s, V \setminus \left\lbrace s \right \rbrace )  .
\end{equation}

\begin{figure}[ht]
    
    \begin{center}
    \begin{tikzpicture}[scale=0.7]
        \tikzstyle{every node} = [rectangle]
        \node (s) at (0,0) {$\{1,2,3,4\}$};
        \node (s123) at (-3,2) {$\{1,2,3\}$};
        \node (s124) at (-1,2) {$\{1,2,4\}$};
        \node (s134) at (1,2) {$\{1,3,4\}$};
        \node (s234) at (3,2) {$\{2,3,4\}$};
        \node (s12) at (-5,4) {$\{1,2\}$};
        \node (s13) at (-3,4) {$\{1,3\}$};
        \node (s23) at (-1,4) {$\{2,3\}$};
        \node (s14) at (1,4) {$\{1,4\}$};
        \node (s24) at (3,4) {$\{2,4\}$};
        \node (s34) at (5,4) {$\{3,4\}$};
        
        \node (s1) at (-3,6) {$\{1\}$};
        \node (s2) at (-1,6) {$\{2\}$};
        \node (s3) at (1,6) {$\{3\}$};
        \node (s4) at (3,6) {$\{4\}$};
        
        \node (t) at (0,8) {$\{ \}$};
        
        \foreach \from/\to in {s123/s, s124/s, s134/s, s234/s,
        s12/s123, s13/s123, s23/s123,
        s12/s124, s14/s124, s24/s124,
        s14/s134, s13/s134, s34/s134,
        s23/s234, s24/s234, s34/s234,
	    s1/s12,s1/s13,s1/s14,        
        s2/s12,s2/s23,s2/s24, 
        s3/s13,s3/s23,s3/s34, 
        s4/s14,s4/s24,s4/s34,
        t/s1,t/s2,t/s3,t/s4}
            \draw[->] (\from) -- (\to);
    \end{tikzpicture}
    \end{center}
    \caption{Subset lattice on a network with four nodes $\{1,2,3,4\}$.}
    \label{lattice}
\end{figure}

In Figure \ref{lattice}, the subset lattice shows all the paths that need to be searched to find a best network. Observe that each edge in the lattice encodes a sink, so that each path also encodes an ordering on the $4$ variables, e.g. the rightmost path encodes the reverse-ordering $\{1,2,3,4\}$. There are a total of $4!$ paths/orderings to be searched. Now suppose we knew the best score corresponding to each edge in all the paths, meaning the best score for the sink corresponding to that edge with the best parents from the subset at the source of that edge. Then naive depth/width first search would compare the sum of scores along all paths to get the best network. In dynamic programming, we proceed from the top of the lattice. Ignoring the empty set, we start with finding the best sink for each singleton in the first row which trivially is the singleton itself. Next, we find the best sink for the subsets of cardinality $2$ in the second row using the edge best scores. And we continue all the way down. Suppose we get the best sinks sublattice as in Figure \ref{sublattice}, then the best network is given by the only fully connected path, and corresponds to the reverse-ordering $\{4,1,2,3\}$. 

\begin{figure}[ht]
    \begin{center}
    \begin{tikzpicture}[scale=0.7]
        \tikzstyle{every node} = [rectangle]
        \node (s) at (0,0) {$\{1,2,3,4\}$};
        \node (s123) at (-3,2) {$\{1,2,3\}$};
        \node (s124) at (-1,2) {$\{1,2,4\}$};
        \node (s134) at (1,2) {$\{1,3,4\}$};
        \node (s234) at (3,2) {$\{2,3,4\}$};
        \node (s12) at (-5,4) {$\{1,2\}$};
        \node (s13) at (-3,4) {$\{1,3\}$};
        \node (s23) at (-1,4) {$\{2,3\}$};
        \node (s14) at (1,4) {$\{1,4\}$};
        \node (s24) at (3,4) {$\{2,4\}$};
        \node (s34) at (5,4) {$\{3,4\}$};
        
        \node (s1) at (-3,6) {$\{1\}$};
        \node (s2) at (-1,6) {$\{2\}$};
        \node (s3) at (1,6) {$\{3\}$};
        \node (s4) at (3,6) {$\{4\}$};
        
        \node (t) at (0,8) {$\{ \}$};
        
        \foreach \from/\to in {s123/s,
        s23/s123,
        s12/s124, 
        s13/s134, 
        s34/s234,
	    s1/s12, s1/s13, s4/s14,     
        s2/s24, 
        s3/s23, 
        s4/s34,
        t/s1,t/s2,t/s3,t/s4}
            \draw[->] (\from) -- (\to);
            \draw [red,  thick] (t) -> (s3);
            \draw [red,  thick] (s3) -> (s23);
            \draw [red,  thick] (s23) -> (s123);
            \draw [red,  thick] (s123) -> (s);
    \end{tikzpicture}
    \end{center}
    \caption{Subset sublattice of four nodes $\{1,2,3,4\}$. The arrows indicate the best sink for each subset. The red arrows indicate the ordering of best sinks providing the best network.}
    \label{sublattice}
\end{figure}

Now, using the dynamic programming approach of SM algorithm, finding the best Bayesian network structure using basic CausNet, without restricting the search space, has the following five steps. 

\begin{enumerate}
\item Calculate the local scores for all $p2^{p-1}$ different (variable, variable set)-pairs.
\item Using the local scores, find best parents for all  $p2^{p-1}$ (variable, possible parent set)-pairs.
\item Find the best sink for all $2^p$ variable sets.
\item Using the results from Step 3, find a best ordering
of the variables.
\item Find a best network using results computed in Steps 2 and 4.
\end{enumerate}

The extensions in this base version of Causnet - possible parent sets identification, 
phenotype driven search, and search space based on `Generational orderings' reduce the search space.

\subsection{Possible parent sets identification}
The possible parent set $pp_i  $ for each node $V_i$, such that $pp_i  \subseteq U_i \subseteq V \setminus \{V_i \}$,
is determined in a preliminary step using marginal association. For this, we test for pairwise association between variables using Pearson's product moment correlation coefficient. Either a specifiable $p$-value $\alpha$ is used as FDR cut-off to identify associations between pairs or we can  use correlation value cutoffs. This gives a possible parent set $pp_i $ for each node $V_i$. If a survival outcome variable is involved, we find associations with it using Cox-regression, one variable at a time.

\subsection{Phenotype driven search}
While associations between all pairs are considered as the default, we consider only two or three levels of associations starting with the outcome variable if phenotype driven search is selected, i.e. we identify the parents, `grandparents' and `great-grandparents' of the phenotype outcome.  

Collecting all the variables that have associations up to the threshold gives us the ``feasible set'' (\textit {feasSet}) of nodes. The original data is then reduced to the \textit{feasSetData}, which includes only the  \textit{feasSet} variables. This is implemented as in algorithm \ref{al_feasSet}. After algorithm \ref{al_feasSet}, the dimension of data is reduced to $\bar{p}$, $\bar{p}<p$, where 
$\bar{p}$ is the number of nodes in the $feasSet$. 

\begin{algorithm}[!h]
\caption{Find pp, Compute feasSet and feasSetData}\label{al_feasSet}
 \textbf{Input} : Data, $\alpha$, phenotypeBased, pp
\begin{algorithmic}[1]
\IF{pp Not NULL} 
\STATE pp = pp
\ELSE 
\IF{phenotypeBased =True} 
\STATE find pp for the phenotype output
\STATE find pp for the phenotype's pp
\ELSE 
\STATE find pp for all variables
\ENDIF
\ENDIF
\STATE find possible offsprings (po) for all variables
\STATE find the feasible set of variables $feasSet$
\STATE get the reduced dimensional data $feasSetData$
\end{algorithmic}
\textbf{Output} :  pp, po, feasSet and feasSetData
\end{algorithm}

\subsection{Dynamic programming on the space of `Generational orderings' of nodes}
After the possible parent sets are identified, the next step is to compute local scores for $feasSet$ nodes as shown in algorithm \ref{al_local}. Computing local scores has computational complexity $\sO\left(  \bar{p}2^{\bar{p}-1}\right) $  if there was no possible parent sets identified for nodes, but reduces to $\sO\left( \bar{p}r^{d}\right) $, where $r$ is the maximum cardinality of possible parent sets of all nodes and $d$ is the in-degree. Because of bounded indegree, the step at line $3$ in this algorithm is truncated at cardinality indegree.

\begin{algorithm}[!h]
\caption{Compute local scores for feasSet nodes}
\label{al_local}
\textbf{Input} : feasSetData, pp, indegree
\begin{algorithmic}[1]
\FOR{$v_i$ in $feasSet$} 
\STATE find all parent subsets $\{pps_j\}$ of possible parents set $pp_i$  of the node $v_i$
\STATE compute local score for node $v_i$ with parents $pps_j$
\ENDFOR
\end{algorithmic}
\textbf{Output} :  pps, ppss
\end{algorithm}

\begin{algorithm}[!h]
\caption{Compute best scores and best parents for feasSet nodes in all parent subsets}
\label{al_bps}
\textbf{Input} : feasSetData, pps, ppss, indegree
\begin{algorithmic}[1]
\FOR{$v_i$ in $feasSet$} 

\FOR{$p_{ij}$ in $pps_i$} 
\STATE find all subsets $\{ppsSub_{ijk}\}$ upto cardinality indegree
\STATE Compute best scores $bpss_ij$ and best parents $bps_{ij}$ of $v_i$ for parent subset $pps_{ij}$ from among $\{ppsSub_{ijk}\}$ using local scores in $ppss$
\ENDFOR

\ENDFOR

\end{algorithmic}
\textbf{Output} :  pps, ppss, bps, bpss
\end{algorithm}

The next step is to compute the best scores and best parents for feasSet nodes in all possible parent subsets as shown in algorithm \ref{al_bps}. This uses local scores already calculated in algorithm \ref{al_local}. This has  computational complexity $\sO\left(  \bar{p}r2^{r-1}\right) $, where $r$ is the maximum cardinality of possible parent sets of all nodes. 

Then we compute best sinks for all possible subsets of feasSet nodes restricted by the parent set constraints. Now compared with SM algorithm that explores all orderings, we explore only the space of what we call \textit{`generational orderings'} to find the best BNs. 

\begin{defi}{A generational ordering is an ordering such that each variable in the ordering has atleast one possible parent in the set of variables preceding it in the ordering, consistent with the possible parent sets. }
\end{defi}

Starting with $\bar{p}$ networks of single nodes, we add one offspring at a time, and iterate over all nodes in the set to find a best sink in a subset of cardinality increasing from $1$ to $\bar{p}$, as in algorithm \ref{al_bsinks}. Observe that while finding best sink for a subset of cardinalty $k$ at level $k$, we already have the best networks of cardinalty $k-1$ at level $k-1$, which is the key feature of the DP approach. Once we have these best sinks, we compute the reverse ordering (possibly multiple orderings) of $\bar{p}$ nodes and compute best network as shown in algorithm \ref{al_bnets}.

\begin{algorithm}[!h]
\caption{Compute best sinks for feasSet subsets}
\label{al_bsinks}
\textbf{Input} : feasSetData, po, pps, ppss, bps
\begin{algorithmic}[1]

\STATE Start with $\bar{p}$ networks of single nodes, where $\bar{p}$  is the number of nodes in $feasSet$, with their NULL scores

\STATE Add one offspring at a time, and iterate over all nodes in the set to find a best sink using information about best score/parent set from algorithm \ref{al_bps} ; end when number of nodes in the iteration is $\bar{p}$
\RETURN list of all possible $2^{\bar{p}}$ subsets of $feasSet$ with best sink and best network score for that subset of nodes
\end{algorithmic}
\textbf{Output} :  bsinks

\end{algorithm}

\begin{algorithm}[!htb]
\caption{Find the best networks}
\label{al_bnets}
\textbf{Input} : bsinks

\begin{algorithmic}[1]
\STATE Find the reverse ordering (possibly multiple orderings) of $\bar{p}$ nodes using the list from algorithm \ref{al_bsinks} 

\STATE Compute best network(s) using the reverse ordering(s) of $\bar{p}$ nodes and using the best parent set for each node in a possible parent set found in algorithm \ref{al_bps} 

\end{algorithmic}
\textbf{Output} :  bestNetwork(s)
\end{algorithm}

\begin{thm} \label{thmGeneralAll}
Without parent set and in-degree restrictions, the Causnet -the generational ordering based DP algorithm - explores all the $\sO\left(  p! 2^{{p \choose 2}} \right) $ network structures for $p$ nodes.
\end{thm}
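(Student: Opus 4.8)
The plan is to establish two things separately and then combine them: first, that once the preliminary association step is skipped every permutation of $[p]$ is a generational ordering, so the generational-orderings search ranges over all $p!$ node orderings (equivalently, over all root-to-bottom paths of the subset lattice of Figure \ref{lattice}); and second, that without an in-degree bound the best-parents step considers, for the $k$-th node of any fixed ordering, every one of the $2^{k-1}$ subsets of its predecessors. Multiplying, the search accounts for $p!\cdot\prod_{k=1}^{p}2^{k-1}=p!\,2^{{p\choose 2}}$ configurations, which matches the claimed count; and since every DAG on $p$ nodes has a topological ordering, this shows every structure is reached.

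First I would unwind the definitions. When no parent set identification is performed we have $pp_i=V\setminus\{v_i\}$ for every $i$. Fix a permutation $\sigma$ of $[p]$ and the induced ordering $v_{\sigma_1},\dots,v_{\sigma_p}$. For each $k\ge 2$ the node $v_{\sigma_k}$ has $v_{\sigma_1}$ among the variables preceding it and $v_{\sigma_1}\in pp_{\sigma_k}$, so $v_{\sigma_k}$ has at least one possible parent among its predecessors; the root $v_{\sigma_1}$ vacuously has no predecessors and is the usual exception to the definition, corresponding to the empty set that sits atop the lattice in Figure \ref{lattice}. Hence $\sigma$ is a generational ordering, and as $\sigma$ was arbitrary the set of generational orderings is all of the symmetric group on $[p]$.

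Next I would trace the dynamic program along one fixed ordering. Because there is no in-degree restriction, the truncation ``at cardinality indegree'' in algorithms \ref{al_local} and \ref{al_bps} is vacuous; because $pp_{\sigma_k}\supseteq\{v_{\sigma_1},\dots,v_{\sigma_{k-1}}\}$, the maximisation defining $bestScore(v_{\sigma_k},pp_{\sigma_k})$ ranges over all $2^{k-1}$ subsets of the predecessors of $v_{\sigma_k}$. Choosing a parent set for each node independently realises each of the $\prod_{k=1}^{p}2^{k-1}=2^{{p\choose 2}}$ DAGs whose topological order is compatible with $\sigma$, and the path of the subset lattice corresponding to $\sigma$ — traversed by the sink recursion of Eqs. \ref{bestSinkV}--\ref{bestScoreV} in algorithm \ref{al_bsinks} — is precisely the locus along which these DAGs are scored. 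Summing over the $p!$ orderings, the algorithm examines $p!\,2^{{p\choose 2}}$ such (ordering, DAG) pairs, i.e.\ $\sO\!\left(p!\,2^{{p\choose 2}}\right)$ structures; since every DAG on $p$ nodes admits at least one topological ordering, none is missed.

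The main obstacle I anticipate is bookkeeping rather than any substantial computation: one must argue that the sink-based lattice recursion together with algorithm \ref{al_bsinks} genuinely reaches every (sink, predecessor-subset) choice along every path — that is, that keeping only best sinks at each level never prunes an ordering before every DAG consistent with it has been accounted for — and one should state cleanly the convention for the root node in the definition of a generational ordering. I would also note that $p!\,2^{{p\choose 2}}$ over-counts the distinct DAGs, since each DAG is revisited once per compatible topological order, so the figure is an upper bound on the number of distinct structures explored, which is all that the $\sO(\cdot)$ statement asserts.
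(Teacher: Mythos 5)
Your proposal is correct and takes essentially the same route as the paper: the paper computes the count level-by-level on the subset lattice as $\prod_{k=1}^{p}\binom{k}{k-1}2^{k-1}=p!\,2^{\binom{p}{2}}$, which is just a different factoring of your (number of orderings) $\times$ (parent-subset choices per ordering) product, and it likewise remarks that the figure over-counts distinct structures, hence the $\sO(\cdot)$. Your additional observations --- that every permutation is a generational ordering when $pp_i=V\setminus\{v_i\}$, and that every DAG is reached because it admits a topological ordering --- make explicit what the paper leaves implicit, but do not change the argument.
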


\begin{proof}
Without parent set restrictions, every node is s possible parent of every other node. In Figure \ref{lattice}, let $k, 0 \leq k \leq p$ be the cardinality of subsets in the subset lattice for $p$ nodes. Let each row in the subset lattice be the $k$th level in the lattice. Now adding a new element in algorithm \ref{al_bsinks} corresponds to an edge between a subset of cardinality $k$ and $k-1$, which considers the added element as a sink in the subset of cardinality $k$. Number of edges to a subset of cardinality $k$ from subsets of cardinality $k-1$ is given by${k \choose k-1}$.
The number of possible parent combinations for a sink in subset of cardinality $k$, without in-degree restrictions, is given by $2^{k-1}$.
In algorithm \ref{al_bps}, we explore all these possible parent sets to find the best parents for each sink $s$ in each subset at each level $k$. The algorithm \ref{al_bsinks} uses this information to get the best sink(possibly multiple) for each subset at level $k$. The total number of networks thus searched by the Causnet algorithm is given by -

\begin{center}
\begin{math}
\begin{aligned}
& \prod_{k=1}^{p} {k \choose k-1}  2^{k-1} \\
&= {1 \choose 0}{2 \choose 1} ... {p-1 \choose p-2}{p \choose p-1} 2^{0}2^{1}...2^{p-1} \\
&= p! 2^{{p \choose 2}} .
\end{aligned}
\end{math}
\end{center}

The number of network structures is $\sO\left(  p! 2^{{p \choose 2}} \right) $ because there are many repeated structures in this combinatorial computation; e.g. there are $p!$ structures with all $p$ nodes disconnected.
\end{proof}

Now suppose the possible parent sets for the four nodes $\{ 1,2,3,4\}$are as follows : $pp_1 = \{ 2,4\}$,
$pp_2=\{ 3,1\}$, $pp_3=\{ 2\}$,$pp_4=\{ 1\}$. Factoring in these possible parent sets , the  subset lattices corresponding to those in figures \ref{lattice} and \ref{sublattice} reduce to those in Figure \ref{latticePP}.Here, the top graph shows the subset lattice with parent set restrictions, with the blue arrows encoding the best sinks for each subset. 
The bottom lattice retains only the subsets that are in the complete path from the top of the lattice to its bottom, and discards the remaining paths. These full paths represent what we call \textit{complete generational orderings}. This is how generational ordering of Causnet ensures maximum connectivity among the reduced set of $\hat{p}$ nodes in the \textit{feasSet}. The red arrows which at base are blue as well, represent the best network. 

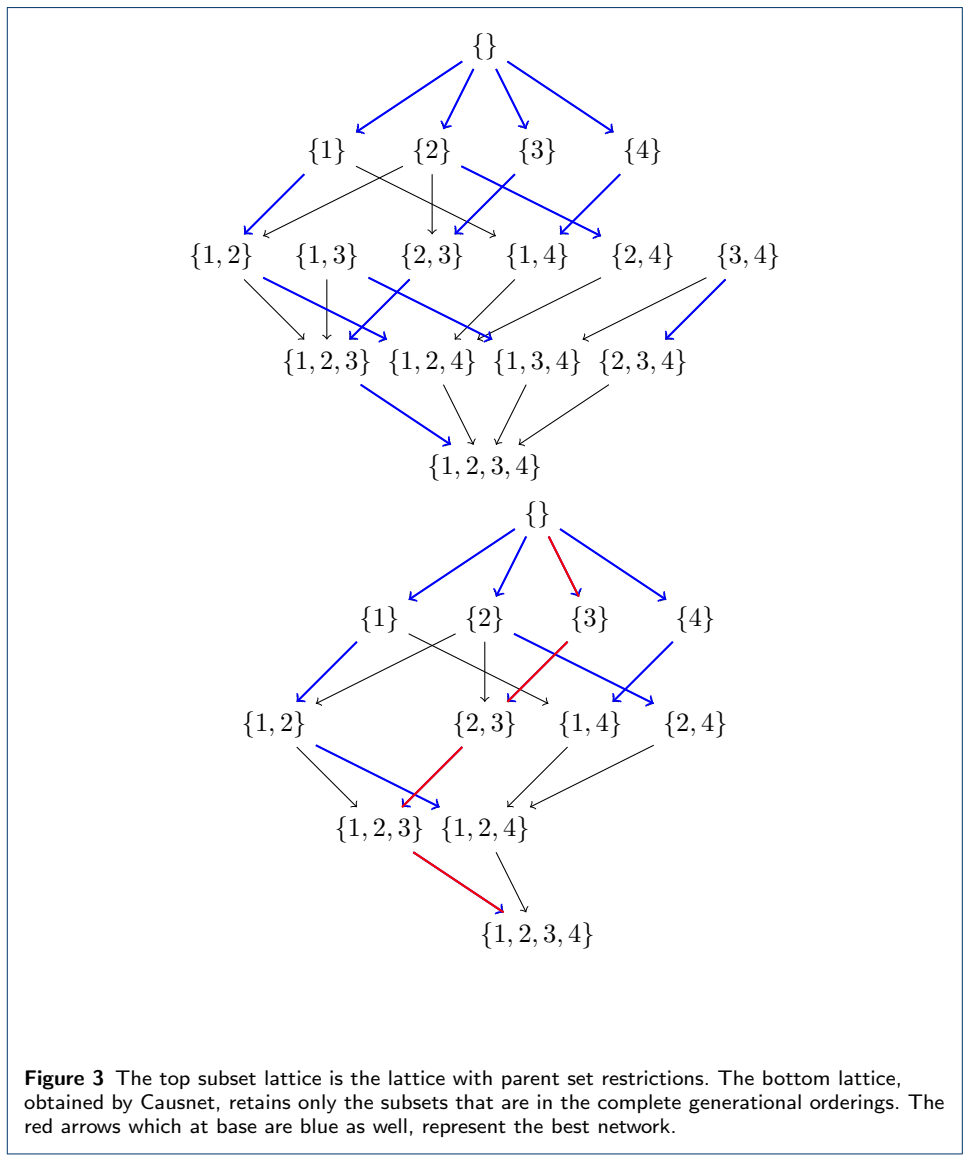
\begin{figure}[!ht]
    \begin{center}
    \begin{tikzpicture}[scale=0.7]
        \tikzstyle{every node} = [rectangle]
        \node (s) at (0,0) {$\{1,2,3,4\}$};
        \node (s123) at (-3,2) {$\{1,2,3\}$};
        \node (s124) at (-1,2) {$\{1,2,4\}$};
        \node (s134) at (1,2) {$\{1,3,4\}$};
        \node (s234) at (3,2) {$\{2,3,4\}$};
        \node (s12) at (-5,4) {$\{1,2\}$};
        \node (s13) at (-3,4) {$\{1,3\}$};
        \node (s23) at (-1,4) {$\{2,3\}$};
        \node (s14) at (1,4) {$\{1,4\}$};
        \node (s24) at (3,4) {$\{2,4\}$};
        \node (s34) at (5,4) {$\{3,4\}$};
        
        \node (s1) at (-3,6) {$\{1\}$};
        \node (s2) at (-1,6) {$\{2\}$};
        \node (s3) at (1,6) {$\{3\}$};
        \node (s4) at (3,6) {$\{4\}$};
        
        \node (t) at (0,8) {$\{ \}$};
        
        \foreach \from/\to in {s123/s, s124/s, s134/s, s234/s,
        s12/s123, s13/s123, s23/s123,
        s12/s124, s14/s124, s24/s124,
         s13/s134, s34/s134,
         s34/s234,
	    s1/s12,s1/s14,        
        s2/s12,s2/s23,s2/s24, 
        s3/s23, 
        s4/s14,
        t/s1,t/s2,t/s3,t/s4}
            \draw[->] (\from) -- (\to);
		\foreach \from/\to in         
        { s123/s,  s23/s123,
        s12/s124, 
        s13/s134, 
        s34/s234,
	    s1/s12,  s4/s14,     
        s2/s24, 
        s3/s23, 
        t/s1,t/s2,t/s3,t/s4}\draw [->] [blue,  thick] (\from) -> (\to);

    \end{tikzpicture}
    \vspace{1cm}
    \begin{tikzpicture}[scale=0.7]
        
        \tikzstyle{every node} = [rectangle]
        \node (s) at (0,0) {$\{1,2,3,4\}$};
        \node (s123) at (-3,2) {$\{1,2,3\}$};
        \node (s124) at (-1,2) {$\{1,2,4\}$};
        \node (s12) at (-5,4) {$\{1,2\}$};
        \node (s23) at (-1,4) {$\{2,3\}$};
        \node (s14) at (1,4) {$\{1,4\}$};
        \node (s24) at (3,4) {$\{2,4\}$};
        
        \node (s1) at (-3,6) {$\{1\}$};
        \node (s2) at (-1,6) {$\{2\}$};
        \node (s3) at (1,6) {$\{3\}$};
        \node (s4) at (3,6) {$\{4\}$};
        
        \node (t) at (0,8) {$\{ \}$};
        
        \foreach \from/\to in {s123/s, s124/s,
        s12/s123,  s23/s123,
        s12/s124, s14/s124, s24/s124,
        s1/s12,s1/s14,        
        s2/s12,s2/s23,s2/s24, 
        s3/s23, 
        s4/s14,
        t/s1,t/s2,t/s3,t/s4}
            \draw[->] (\from) ->(\to);
		\foreach \from/\to in         
        { s123/s, 
        s23/s123,
        s12/s124, 
        s1/s12,  s4/s14,     
        s2/s24, 
        s3/s23, 
        t/s1,t/s2,t/s3,t/s4}\draw [->][blue,  thick] (\from) -> (\to);   
		\draw [red,  thick] (t) -> (s3);
            \draw [red,  thick] (s3) -> (s23);
            \draw [red,  thick] (s23) -> (s123);
            \draw [red,  thick] (s123) -> (s);        
         \end{tikzpicture}
    \end{center}
    \caption{The top subset lattice is the lattice with parent set restrictions. The bottom lattice, obtained by Causnet, retains only the subsets that are in the complete generational orderings. The red arrows which at base are blue as well, represent the best network.}
    \label{latticePP}
\end{figure}

\begin{defi}{A generational ordering is a complete generational ordering if it has all the variables in the $feasSet$ in the ordering.}
\end{defi}

\begin{eg}
In the top subset lattice in Figure \ref{latticePP}, the two paths from subset $\{3,4\}$ downwards can be seen as two generational orderings missing a generational order relation between nodes $3$ and $4$. Let's denote this  ordering as $\{2,1\} $, which is not a complete generational ordering.
\end{eg}

\begin{lemma} \label{lemmaGenComplete}
  With parent set restrictions, the Causnet -the generational ordering based DP algorithm - searches the whole space of complete generational orderings.
\end{lemma}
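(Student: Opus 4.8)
The plan is to match the complete generational orderings of the $feasSet$ with a concrete family of directed paths in a parent-set-restricted subset lattice, and then to verify that Algorithm \ref{al_bsinks} together with Algorithm \ref{al_bnets} walks exactly that family of paths and nothing else. Keeping the paper's notation, $\bar p$ is the number of nodes in the $feasSet$ and each node $v_i$ has a possible parent set $pp_i$, a set of $feasSet$ nodes. Let $\mathcal{L}$ be the directed graph whose vertices are the subsets of the $feasSet$ and which has an edge $S \to S \cup \{v_i\}$ (for $v_i \notin S$) exactly when $S = \emptyset$ or $pp_i \cap S \neq \emptyset$. This is precisely the situation in which line 2 of Algorithm \ref{al_bsinks} is permitted to ``add the offspring $v_i$'' to the subset $S$ it has built so far and to record $v_i$ as a sink of $S \cup \{v_i\}$, while the $\bar p$ single-node networks of line 1 realise the $S = \emptyset$ edges. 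For the running example, $\mathcal{L}$ is the top lattice of Figure \ref{latticePP}.

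First I would prove a bijection between complete generational orderings of the $feasSet$ and the directed paths in $\mathcal{L}$ from $\emptyset$ to $feasSet$. Given a complete generational ordering $(v_{\sigma_1},\dots,v_{\sigma_{\bar p}})$, the nested chain $\emptyset \subsetneq \{v_{\sigma_1}\} \subsetneq \{v_{\sigma_1},v_{\sigma_2}\} \subsetneq \dots \subsetneq feasSet$ is such a path: the first step uses the $S=\emptyset$ clause, and for $j \ge 2$ the defining property of a generational ordering is that $v_{\sigma_j}$ has a possible parent among $v_{\sigma_1},\dots,v_{\sigma_{j-1}}$, i.e. $pp_{\sigma_j} \cap \{v_{\sigma_1},\dots,v_{\sigma_{j-1}}\} \neq \emptyset$, which is exactly the edge condition at that step. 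Conversely, listing the nodes added along any path in $\mathcal{L}$ from $\emptyset$ to $feasSet$ yields, by the same equivalence, an ordering of all $\bar p$ nodes in which every node after the first has a possible parent among its predecessors, i.e. a complete generational ordering. Distinct orderings give distinct paths, so the correspondence is a bijection.

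Next I would check that Algorithm \ref{al_bsinks} constructs exactly the paths of $\mathcal{L}$ that terminate at $feasSet$, by induction on the length of the prefix built. The base case is line 1, which seeds all $\bar p$ singletons. For the inductive step, if the subset $\{v_{\sigma_1},\dots,v_{\sigma_{j-1}}\}$ has been built, then because the ordering is generational $v_{\sigma_j}$ is a possible offspring of a node already in that subset, so line 2 adjoins it; after $\bar p$ steps the whole $feasSet$ is reached along this path, so no complete generational ordering is overlooked. Conversely the only move the algorithm makes is ``add an offspring'', which is always an edge of $\mathcal{L}$, so every subset it visits carries a generational ordering and every $feasSet$-terminating path it builds corresponds to a complete generational ordering --- nothing outside the class is explored. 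Moreover, since line 2 iterates over \emph{all} nodes of a set when choosing a sink, \emph{every} incoming $\mathcal{L}$-edge of \emph{every} reachable subset is used, so all complete paths (equivalently, by the bijection, all complete generational orderings) are covered, not just one representative per subset. Combining this with the best-parent search over all subsets of $pp_i$ in Algorithm \ref{al_bps}, the decomposability of the score, and the dynamic-programming recursion for $bestscore$ (Eq. \ref{bestScoreV}) with its maximum over sinks restricted to $\mathcal{L}$-edges, one gets that Algorithm \ref{al_bsinks} and Algorithm \ref{al_bnets} together evaluate, for each complete generational ordering, a best network consistent with it and with the $pp$ constraints, and return the best of these. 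This is the parent-set-restricted analogue of the count in Theorem \ref{thmGeneralAll}.

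I expect the only genuine subtlety --- the ``hard part'' --- to be the bookkeeping at the top of the lattice. The first node of a generational ordering is exempt from the requirement of having a possible parent among its predecessors, and one must verify that this exemption is exactly captured by the $S=\emptyset$ clause of the edge relation, equivalently by line 1 of Algorithm \ref{al_bsinks}, so that the bijection is tight and no ordering is either missed or conjured. A second point worth an explicit sentence, rather than being taken for granted, is that one subset may lie on several complete paths (be reached through several generational sub-orderings); the ``iterate over all nodes in the set to find a best sink'' clause of Algorithm \ref{al_bsinks} is exactly what guarantees that all of them are realised, which is why the algorithm searches the \emph{whole} space rather than a spanning subfamily. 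Everything else is a routine check against the definitions.
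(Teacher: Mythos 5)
Your proposal is correct and rests on the same key observation as the paper's proof, namely that the ``add a possible offspring at each step'' rule of Algorithm \ref{al_bsinks} coincides exactly with the defining condition of a generational ordering; the paper phrases this as a one-step contradiction (a missed complete ordering would have to contain a variable with no possible parent among its predecessors), while you give the equivalent direct form as a bijection with lattice paths plus induction. The extra material on soundness and on the scoring recursion goes beyond what the lemma asserts (it belongs to Theorem 3.2), but it does not introduce any error.
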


\begin{proof}
To see this, suppose not. Then there is a complete generational ordering that is not searched. But the algorithm \ref{al_bsinks} adds a variable at level $k$ that is a possible offspring of the preceding subset at level $k-1$ at each step starting with the empty set. So, this missed ordering must have a variable at level $k$ in the ordering that has no possible parent in the set of variables before it at level $k-1$. That makes it an incomplete generational ordering, which is a contradiction.
\end{proof} 

\begin{thm}
With parent set restrictions, the Causnet algorithm explores all the network structures consistent with possible parent sets for $p$ nodes.
\end{thm}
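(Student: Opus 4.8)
The plan is to derive this from two facts already in hand: Lemma~\ref{lemmaGenComplete}, which says Causnet searches the entire space of complete generational orderings, and the enumeration bookkeeping inside the proof of Theorem~\ref{thmGeneralAll}, which says that along any fixed ordering the algorithm examines every admissible assignment of parent sets. Accordingly I would split the argument into two parts: (i) every DAG $G = (G_1,\dots,G_p)$ with $G_i \subseteq pp_i$ for all $i$ is consistent with at least one complete generational ordering; and (ii) once such an ordering $\{v_{\sigma_i}\}$ is fixed, algorithms~\ref{al_bps} and~\ref{al_bnets} run through exactly the parent-set choices that are compatible with the ordering and with the $pp_i$, and $G$ is one of them.

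For part (i) I would use the fact, invoked throughout the paper, that every DAG has a sink: peeling off sinks repeatedly yields an ordering $\{v_{\sigma_i}\}$ with $G_{\sigma_j}\subseteq\{v_{\sigma_1},\dots,v_{\sigma_{j-1}}\}$, i.e. an ordering consistent with $G$ in the sense of Section~2. The thing to verify is that, after possibly moving the parentless nodes of $G$ toward the front, this ordering is generational: if $v_{\sigma_k}$ has a parent in $G$, that parent belongs to $pp_{\sigma_k}$ and precedes $v_{\sigma_k}$, so $v_{\sigma_k}$ has a possible parent among its predecessors; the nodes that are roots of $G$ either come first or can be placed after some node of their $pp$ set, using the connectivity of the $feasSet$ built in algorithm~\ref{al_feasSet}. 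Having produced a complete generational ordering consistent with $G$, Lemma~\ref{lemmaGenComplete} guarantees Causnet searches it.

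For part (ii) I would reuse the counting step from the proof of Theorem~\ref{thmGeneralAll} almost verbatim, with one change: the factor $2^{k-1}$ (all subsets of the $k-1$ predecessors) is replaced by the number of subsets of $pp_s$ that lie among those $k-1$ predecessors. At level $k$, algorithm~\ref{al_bsinks} iterates over every node $s$ of the current subset as a candidate sink, and algorithm~\ref{al_bps} iterates over every subset of $pp_s$ contained in the remaining $k-1$ nodes as candidate parents; so along the ordering from part~(i) every tuple $(G_{\sigma_1},\dots,G_{\sigma_p})$ with $G_{\sigma_j}\subseteq pp_{\sigma_j}\cap\{v_{\sigma_1},\dots,v_{\sigma_{j-1}}\}$ is examined, in particular $G$ itself. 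Since $G$ was an arbitrary structure consistent with the possible parent sets, every such structure is explored.

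I expect part (i) to be the main obstacle. Unlike the unconstrained setting of Theorem~\ref{thmGeneralAll}, a topological ordering of a consistent DAG is not automatically a complete generational ordering once the $pp_i$ are sparse: a node that is a root of $G$ may have no possible parent among the nodes forced to precede it. The crux is therefore to show that the roots of $G$ can always be inserted without violating the generational property --- equivalently, that the $feasSet$ as constructed admits complete generational orderings refining any consistent DAG --- and this is the only place where the argument goes beyond quoting Lemma~\ref{lemmaGenComplete} and reusing the bookkeeping of Theorem~\ref{thmGeneralAll}.
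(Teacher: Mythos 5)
Your decomposition is essentially the one the paper uses: its proof of this theorem just cites Lemma~\ref{lemmaGenComplete} and reruns the count from Theorem~\ref{thmGeneralAll} with the factor $2^{k-1}$ replaced by $2^{\hat{k}}$, $\hat{k}\le k-1$, which is your part~(ii). Where you depart from the paper is in isolating part~(i) as a separate claim that needs proof, and your suspicion there is justified: part~(i) is false as stated, and the paper never establishes it (its proof in fact tacitly concedes the point by saying Causnet discards networks lying ``in an incomplete generational ordering'' --- without showing that no structure consistent with the parent sets lives only in such orderings). A concrete failure, using the paper's own running example $pp_1=\{2,4\}$, $pp_2=\{3,1\}$, $pp_3=\{2\}$, $pp_4=\{1\}$: the collider $G$ with $G_1=\{2,4\}$ and $G_2=G_3=G_4=\emptyset$ satisfies $G_i\subseteq pp_i$ for all $i$, so it is consistent with the possible parent sets. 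But any topological ordering of $G$ must place both $2$ and $4$ before $1$, and whichever of $2,4$ comes second has no possible parent among its predecessors ($pp_4=\{1\}$ and $1$ is not yet available; symmetrically $pp_2=\{3,1\}$ contains neither $4$ nor anything preceding it in the ordering $4,2,\dots$). Hence no complete generational ordering is a topological ordering of $G$, and $G$ is never scored by algorithm~\ref{al_bsinks}. Since marginal association is symmetric, this situation --- a v-structure whose parents are not themselves marginally associated --- is generic, and it is precisely the kind of structure one wants a BN learner to recover.

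So the repair you sketch for part~(i) (fronting the roots of $G$, appealing to connectivity of the $feasSet$) cannot work in general: the obstruction is not connectivity of the association graph but the interaction between the direction constraints imposed by $G$ and the generational requirement. The theorem is true only under the weaker reading ``all network structures consistent with the possible parent sets \emph{and realizable within some complete generational ordering},'' which is what the paper's two-sentence proof actually delivers. Your part~(ii) is sound and matches the paper; your instinct that part~(i) is the crux, rather than a corollary of Lemma~\ref{lemmaGenComplete}, is the correct reading of the situation --- it just turns out the crux cannot be resolved in the affirmative without changing either the statement or the search.
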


\begin{proof}
Combining lemma \ref{lemmaGenComplete}, and theorem \ref{thmGeneralAll}, it's straightforward to see that Causnet discards only the networks that are either inconsistent with parent set restrictions or in an incomplete generational ordering. In each complete generational ordering, it searches all $2^{\hat{k}}$ possible parent combinations for each sink at level $k$, where $ \hat{k} \leq k-1$ because of parent set restrictions.
\end{proof}

\section{Simulations}
We compare CausNet with three other methods that have been widely used for optimal Bayesian network identification to infer disease pathways from multiscale genomics data.
The first method is Bartlett and Cussens' GOBNILP \cite{gob}, an integer learning based method that's considered state-of-art exact method for finding optimal Bayesian network. The other two methods are BNlearn's Hill Climbing (HC) and Max-min Hill Climbing (MMHC) (\cite{mmhc}, \cite{jstatsoft09}), which are both widely used approximate methods, see e.g. \cite{BN_Ainsworth}, \cite{bnlearn_Scutari}. Hill-Climbing (HC) is a score-based algorithm that uses greedy search on the space of the directed graphs  \cite{HCref}. 
Max-Min Hill-Climbing (MMHC) is a hybrid algorithm \cite{MMHCCref} that first learns the
undirected skeleton of a graph using a constraint-based algorithm called Max-Min
Parents and Children (MMPC); this is followed by the application of a score-based
search to orient the edges. 

We simulated Bayesian networks by generating an $N$ x $p$ data matrix of continuous Gaussian data. The dependencies are simulated using linear regression with the option to control effect sizes. Some number of the $p$ nodes were designated as sources ($p_1$), some intermediate ($p_2$), and some sinks ($p_3$), the remainder ($p_0$) being completely independent.  The actual DAGs of the $p_1+p_2+p_3$ nodes vary across replicates.  The requirement for being a DAG is implemented using the idea of ordering of vertices. We pick a random ordering of a randomly chosen subset of $p$ vertices. Then enforcing each vertex to have parents only from the set of vertices above itself in the ordering guarantees a DAG. 

The False Discovery Rate (FDR) and Hamming Distance are used as the metrics to compare the methods. With $FP$ defined as the number of false positives and  $TP$ defined as the number of true positives, FDR is defined as :
$$
FDR = \frac{FP}{FP+TP}.
$$
Controlling for the false discovery rate (FDR) is a way to identify as many significant features (edges in case of BNs) as possible while incurring a relatively low proportion of false positives. This is especially useful metric for high dimensional data and for network analysis (\cite{fdr}). 

The Hamming distance between two labeled graphs $G_1$ and $G_2$ is given by 
$|\lbrace \left( \left(  e \in E(G_1)  
\&  \left(e  \not\in E(G_2)\right) \right) or \left( (e \not\in E(G_1) \& e \in E(G_2)\right)\right) \rbrace |$, where $E(G_i)$ is the edge set of graph $G_i$.
Simply put, this is the number of addition/deletion operations required to turn the edge set of $G_1$ into that of $G_2$. The Hamming distance is a measure of structural similarity, and forms a metric on the space of graphs (simple or directed), and gives a good measure of goodness of a predicted graph (\cite{hamm1}, \cite{hamm2}). In the context of predicted and the truth graph, with $FP$ defined as the number of false positives and  $FN$  as the number of false negatives, Hamming Distance is defined as :
$$
Hamming Distance = FP+FN.
$$

As the first set of simulations using parent set identification using correlation tests, we ran simulations using multiple replicates of networks, the first with $p = 10,20,40,50,60,100$, and $N = 500,1000,2000$. Figure \ref{FDRPic} shows the plot of average FDR for different values of $p$ and $N$, and their linear trend with BIC scoring for CausNet. 

\begin{figure}[!htb]
\begin{center}
\includegraphics[width=4in]{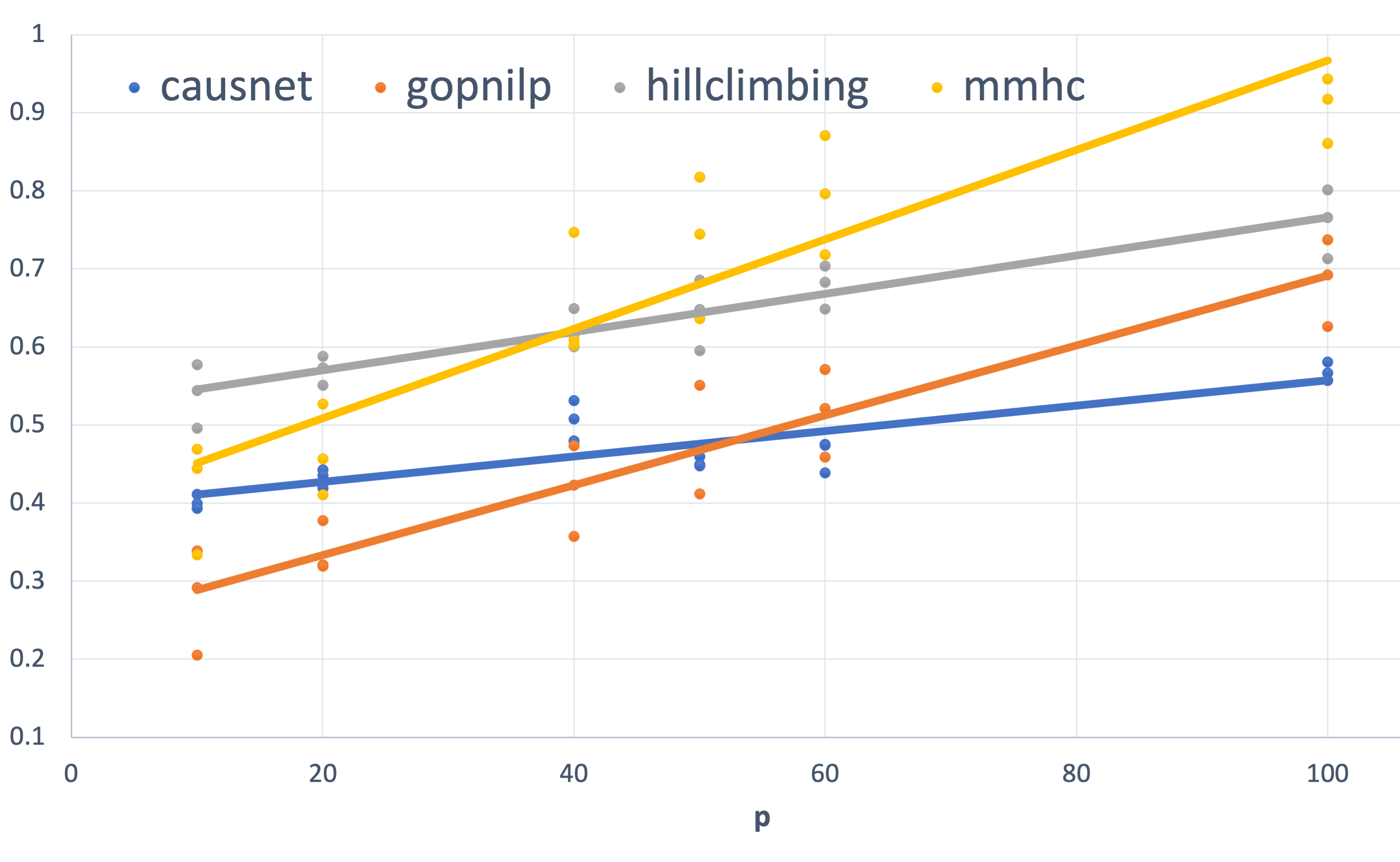}
\end{center}
\caption{\csentence{Average FDR - upto $100$ variables.}
$p = 10,20,40, 50, 60, 80, 100$ and $N = 500,1000,2000$.}

\label{FDRPic}
\end{figure} 

We can see that for lower values of $p$, Gobnilp has the lowest values of FDR with CausNet the second best, but CausNet performs the best for higher values of $p$. The results for the BGe scoring for CausNet are similar qualitatively. In the tables \ref{table:fdr1} and \ref{table:fdr2}, we show the average $FDR$ across the 9 combinations of $N$ and $p$, both with and without taking directionality into account. The results for CausNet with the choice of scoring function---either BGE or BIC---are given. The results show that our method performs very well compared with the methods considered.

\begin{table}[!ht]
\caption{FDR ($p=10,20,40, N= 500,1000,2000$)} 
\centering 
\begin{tabular}{c c c c} 
\hline\hline 
Method & FDR(undirected) & FDR(directed) \\ [0.5ex] 
\hline 
CausNet (BIC)& 0.229 & 0.412  \\
CausNet (BGE)& \bf 0.223 & 0.404  \\
Gobnilp & 0.312 & \bf 0.345  \\
BN-HC & 0.466 & 0.577  \\
BN-MMHC & 0.368 & 0.511 \\ [1ex] 
\hline 
\end{tabular}
\label{table:fdr1} 
\end{table}

\begin{table}[!ht]
\caption{FDR ($p=50,60,100, N= 500,1000,2000$)} 
\centering 
\begin{tabular}{c c c c} 
\hline\hline 
Method & FDR(undirected) & FDR(directed) \\ [0.5ex] 
\hline 
CausNet (BIC)& 0.359 & 0.494 \\
CausNet (BGE)& \bf 0.328 & \bf 0.466   \\
Gobnilp & 0.540 &  0.560  \\
BN-HC & 0.635 & 0.694 \\
BN-MMHC & 0.787 & 0.812 \\ [1ex] 
\hline 
\end{tabular}
\label{table:fdr2} 
\end{table}

\begin{figure}[!h]
\begin{center}
\includegraphics[width=4in]{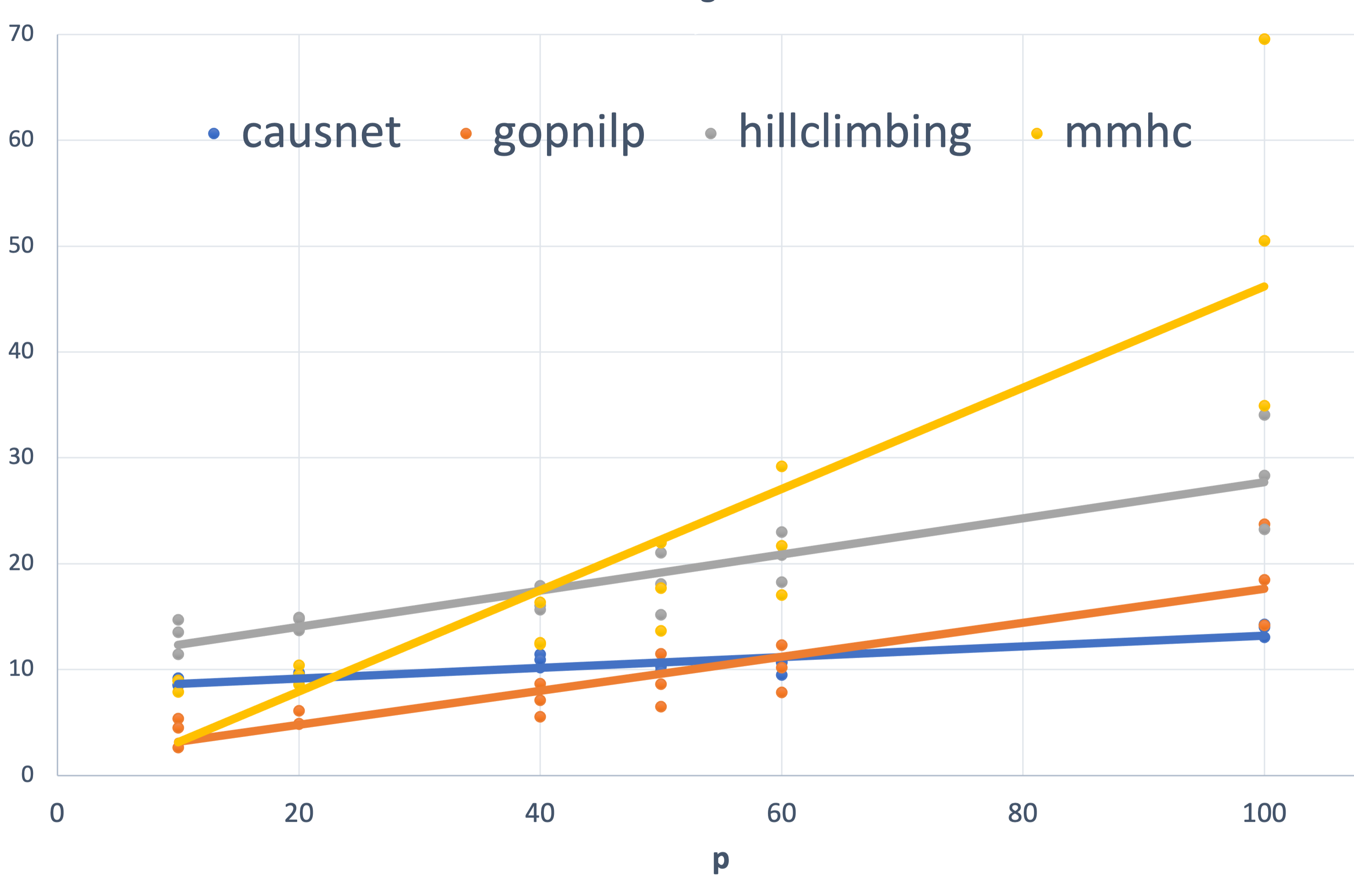}
\end{center}
\caption{\csentence{Average Hamming Distance - upto $100$ variables.}
$p = 10,20,40, 50, 60, 80, 100$ and $N = 500,1000,2000$.}
\label{HammingPic}
\end{figure} 

For the number of variables up to $40$ (Table \ref{table:fdr1}), on the metric of FDR, CausNet performs second best after Gobnilp for directed graphs and the best for undirected graphs. This is true for both scoring methods - BIC and BGE. For the number of variables between $50$ and $100$ (Table (Table \ref{table:fdr2})), our method performs the best for both directed and undirected graphs, using either of the two scoring methods, BIC and BGE. 

Figure \ref{HammingPic} shows the plot of average Hamming Distance for different values of $p$ and $N$, and their linear trend with BIC scoring for CausNet. We can see that for lower values of $p$, Gobnilp has the lowest values of Hamming Distance with CausNet the second best for most part, but CausNet performs the best for higher values of $p$. The results for the BGe scoring for CausNet are the same qualitatively.

\subsection{Phenotype based search}

For phenotype based parent set identification, we find $3$ levels of possible parents of the outcome variable. For these simulations, we use $p = 10,20,40, 50, 60, 80, 100$ and $N = 500,1000,2000$. The results are shown in figures \ref{FDRPicPheno} and \ref{HammingPicPheno}. 

\begin{figure}[!h]
$\includegraphics[width=4in]{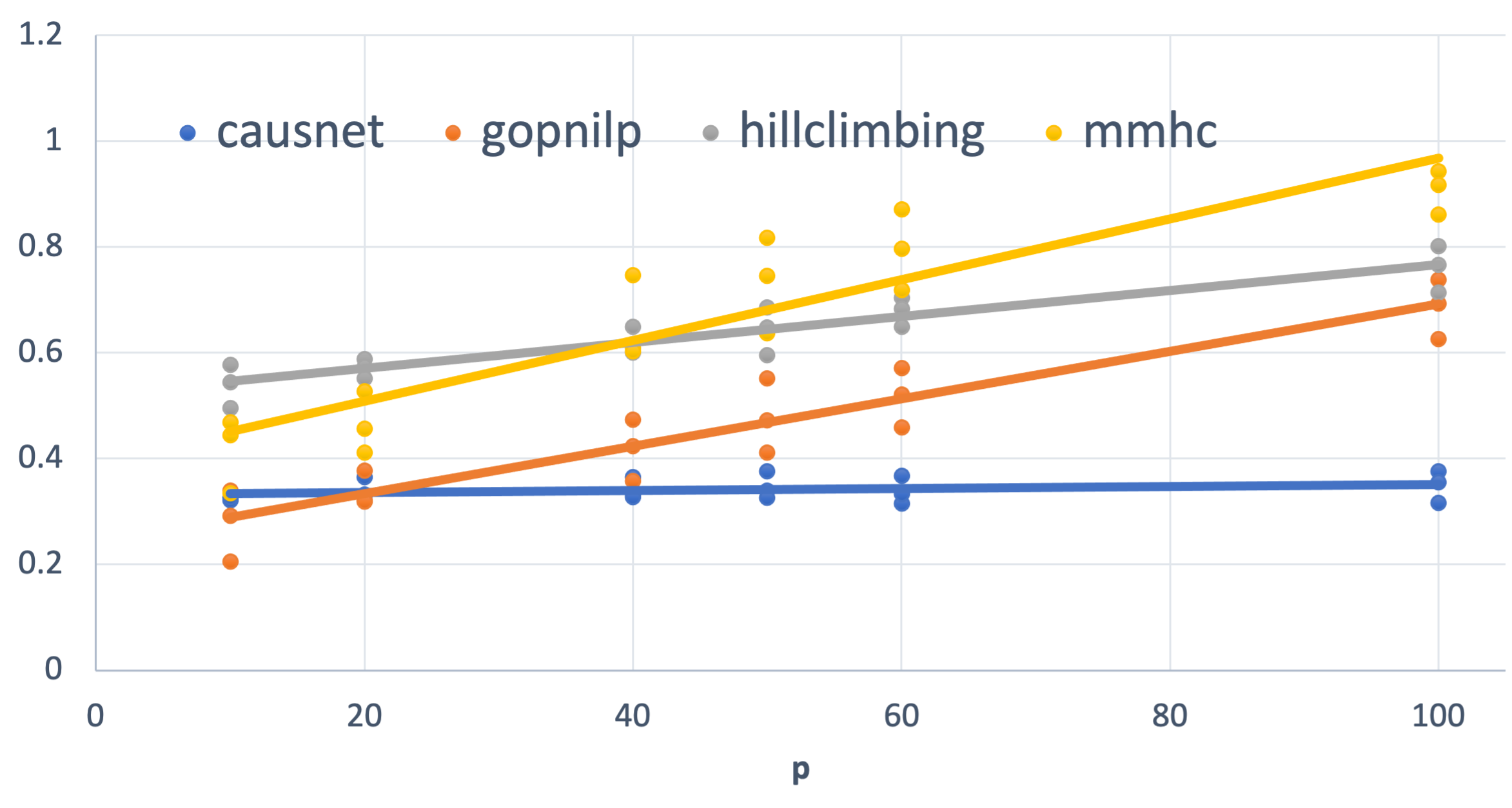}$
\caption{\csentence{Average FDR - Phenotype based search.}
$p = 10,20,40, 50, 60, 80, 100$ and $N = 500,1000,2000$.}
\label{FDRPicPheno}
\end{figure} 

\begin{figure}[h!]
$\includegraphics[width=4.5in]{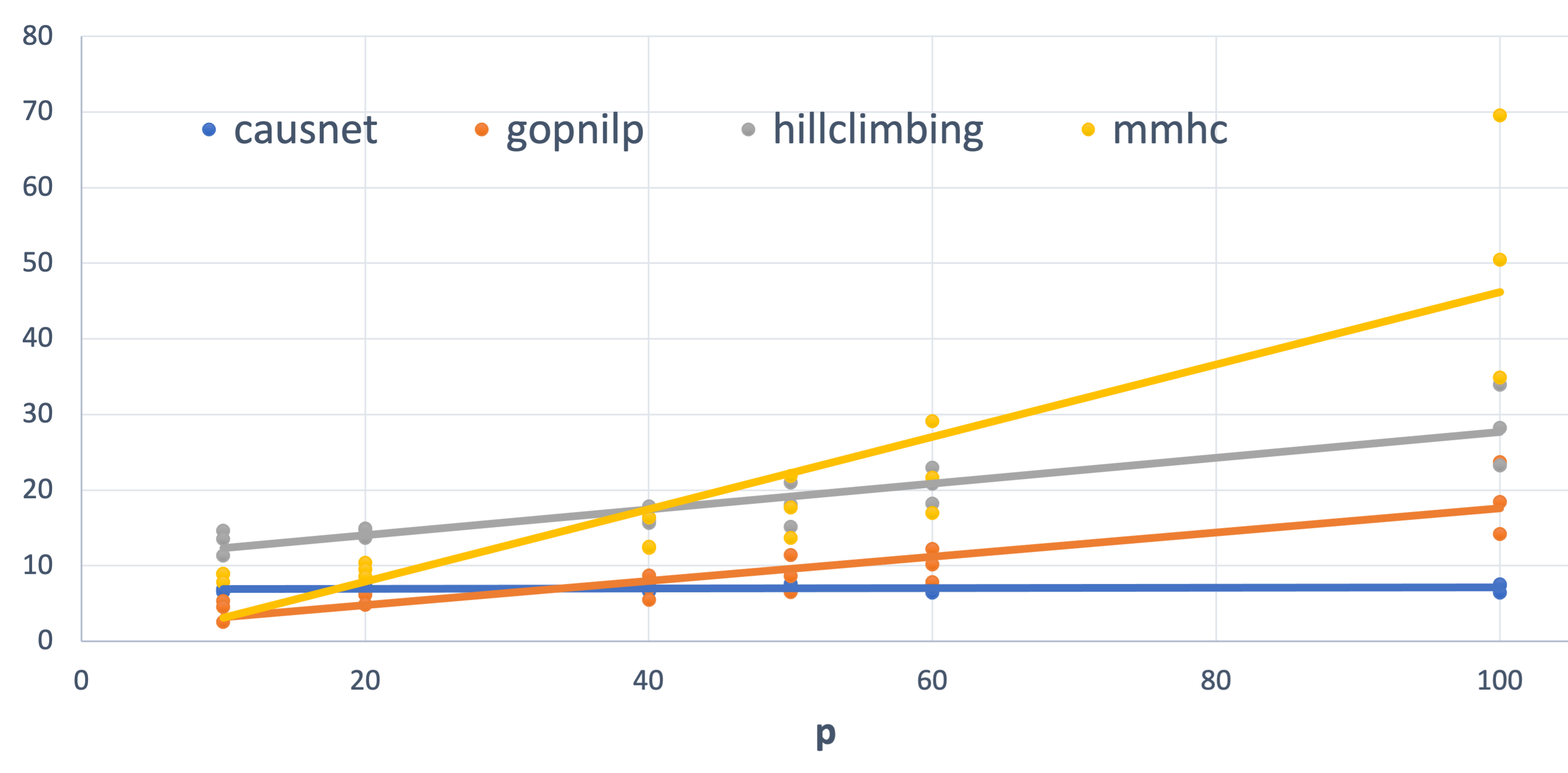}$
\caption{\csentence{Average Hamming Distance - Phenotype based search.}
$p = 10,20,40, 50, 60, 80, 100$ and $N = 500,1000,2000$..}
\label{HammingPicPheno}
\end{figure}

The FDR for phenotype-driven parent sets is the best for number of variables greater than $10$; Gobnilp is the second best. In terms of Hamming distance too, Causnet again is the best for variables more than $20$; Gobnilp and MMHC have lower Hamming distance$p$ than that of Causnet for variables less than $20$, but rise quickly for variables more than $20$; overall, Gobnilp is again the second best.

\subsection{Number of variables up to 1000}
For these simulations, we use $p = 200,500,1000$, and $N = 500,1000,2000$. For these simulations, we don't compare with the other three algorithms as they either can not handle such high number of variables or take many orders of magnitude longer. The results are shown in figures \ref{FDR1000} and \ref{Ham1000}. Observe that both the FDR and Hamming Distance values are better than what we had with other methods when the number of variables was less than $100$. 

\begin{figure}[h!]
$\includegraphics[width=4in]{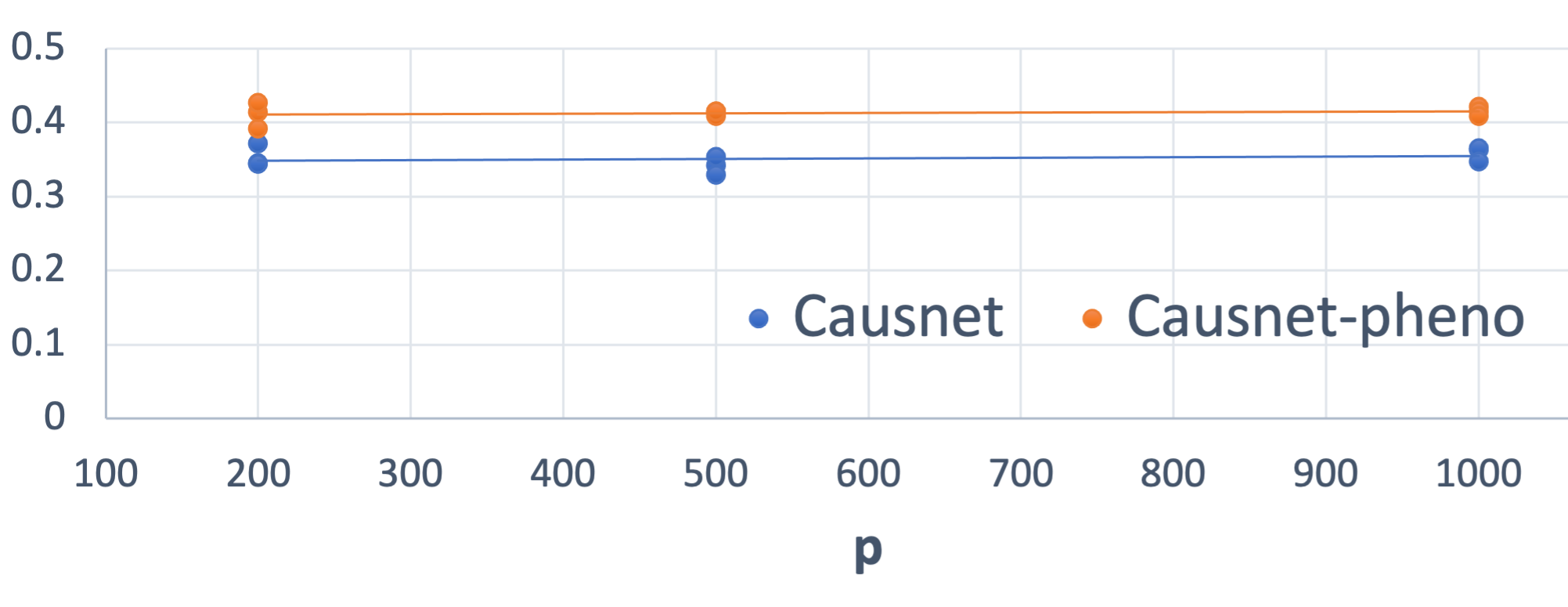}$
\caption{\csentence{Average FDR - upto $1000$ variables.}
$p = 200, 500, 1000$ and $N = 500,1000,2000$.}
\label{FDR1000}
\end{figure} 

\begin{figure}[h!]
$\includegraphics[width=4in]{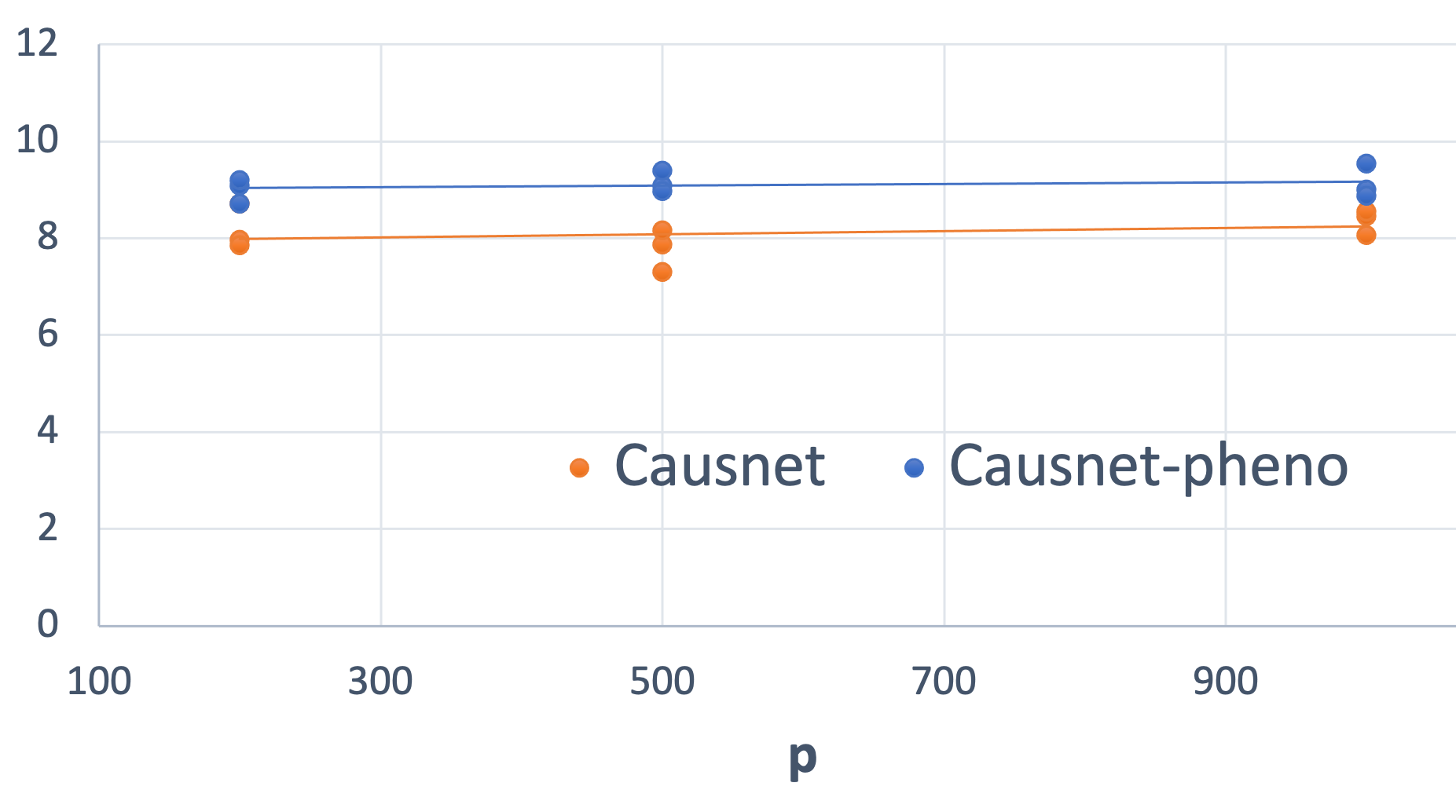}$
\caption{\csentence{Average Hamming Distance - upto $1000$ variables.}
$p = 200, 500, 1000$ and $N = 500,1000,2000$.}
\label{Ham1000}
\end{figure} 

\section{Application to clinical trial data}
We applied our method to recently published data on gene expression in relation to ovarian cancer prognosis among participants in a clinical trial
\cite{MILLSTEIN20201240}. 
The aim of this study was to develop a prognostic signature based on gene expression for overall survival (OS) in patients with high-grade serous ovarian cancer (HGSOC). Expression of 513 genes, selected from a meta-analysis of 1455 tumours and other candidates, was measured using NanoString technology from formalin-fixed paraffin-embedded tumour tissue collected from 3769 women with HGSOC from multiple studies. Elastic net regularization for survival analysis  was applied to develop a prognostic model for 5-year OS, trained on 2702 tumours from 15 studies and evaluated on an independent set of 1067 tumours from six studies. Results of this study showed that expression levels of 276 genes were associated with OS (false discovery rate $< 0.05$) in covariate-adjusted single-gene analyses. 

We applied our method CausNet to this gene expression dataset of $513$ genes and survival outcome. For dimensionality reduction and parent set identification, we used three-level phenotype driven search. For the disease node `Status',  we carried out Cox propotional hazard regression for each gene separately, adjusted for age, stage, and stratified site. 
Then we computed analysis of variance  tables for the fitted model objects and created a list of $p$-values based on $\chi^2$ distribution. We then adjusted these values using the Benjamini \& Hochberg (BH) method. Then we choose $5$ genes - ``ZFHX4" , ``TIMP3", ``COL5A2", ``FBN1", and   ``COL3A1" with the most significant p-values as possible parents of the disease node. At the next level, we used correlation between these and rest of the genes to pick their possible parent sets. After that, once again, parent sets were identified for these possible grand-parents of the disease node. These three levels of ancestors of the disease node picked $16$ genes in all with non-null parent sets. This reduced dataset and list of possible parents was used by CausNet with the BIC score to find a best network. Using the in-degree of $2$, we identified a best network as shown in Figure \ref{OvarianPic}.

\begin{figure}[!htb]
\begin{center}
\includegraphics[width=4in]{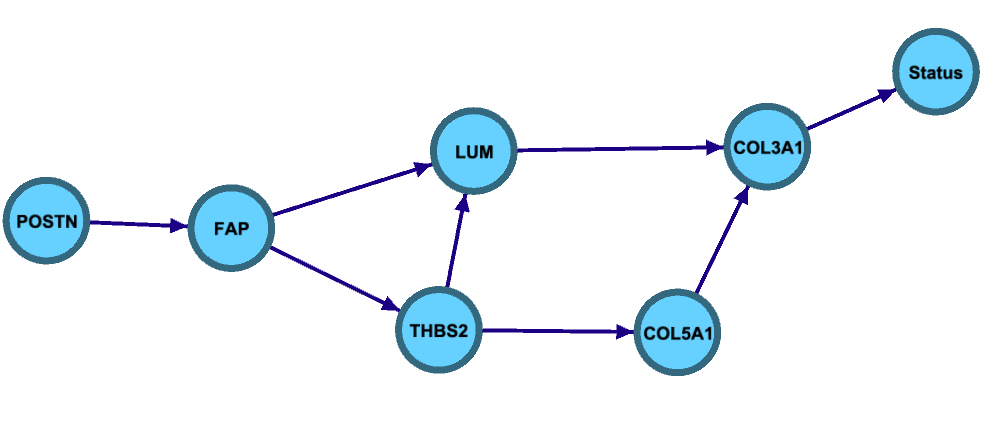}
\end{center}
\caption{Ovarian Cancer Network}
\label{OvarianPic}
\end{figure} 

\section{Discussion}
We implemented a dynamic programming based Optimal Bayesian Network (BN) Structure Discovery algorithm with parent set identification with `generational orderings' based search for optimal networks, which is a novel way to efficiently search the space of possible networks given the possible parent set.

Our main novel contribution aside from providing software is the revision of the SM algorithm 3 \cite{Silander} to incorporate possible parent sets and `generational orderings' based search for a more efficient way to explore the search space as compared to the original approach based on lexicographical ordering. In doing so, we  cover the entire constrained search space without searching through networks that don't conform to the parent set constraints. While the basic algorithm can be applied to any dataset from any domain, the phenotype based algorithm is particularly suitable for disease outcome modeling. 

The simulation results show that our algorithm performs very well when compared with three state-of-art algorithms that are widely used currently. The parent set constraints reduce the search space , and the runtime significantly, still giving better results than these algorithms, especially for number of variables greater than $60$.

The application to the recently published ovarian cancer gene-expression data with survival outcomes showed our algorithm's usefulness in disease modeling. It yielded a sparse network of $6$ nodes leading to the disease outcome from gene expression data of $513$ genes in just a few minutes on a basic personal computer. This disease pathway can be used to gain useful insights for designing further medical and biological experiments.

Other features of the algorithm include specifiable parameters - correlation, FDR cutoffs, and in-degree - which can be tuned according to the application domain. Availability of two scoring option - BIC and Bge - and implementation of survival outcomes and mixed data types makes our algorithm very suitable for many types of biomedical data, e.g. GWAS and omics data to find disease pathways.

\begin{backmatter}

\section*{Funding}
Funded by a Program Project Grant from the National Cancer Institute, P01 CA58959. 
\section*{Acknowledgements}

The authors would like to thank Professor Duncan C. Thomas for helpful inputs during the development of this work.

\section*{Availability of data and materials}
The CausNet software package in R is available at https://github.com/nand1155/CausNet.


\bibliographystyle{bmc-mathphys} 
\bibliography{proposal}      

\end{backmatter}
\end{document}